\documentclass[runningheads]{llncs}
\usepackage[T1]{fontenc}
\usepackage{graphicx,verbatim}
\usepackage{amsfonts}
\usepackage{amsmath}
\usepackage[table,xcdraw]{xcolor} 
\usepackage{colortbl} 
\usepackage{xcolor}
\usepackage{marvosym}
\usepackage{hyperref}
\usepackage{amssymb}
\usepackage{graphicx,verbatim}
\usepackage{graphicx,verbatim}
\usepackage{multirow}
\definecolor{electricindigo}{rgb}{0.44, 0.0, 1.0}
\definecolor{lightblue}{RGB}{240,245,255}
\definecolor{darkblue}{RGB}{40,40,85}
\definecolor{babyblue}{rgb}{0.54, 0.81, 0.94}
\definecolor{pearDark}{HTML}{2980B9}
\definecolor{pearDarker}{HTML}{1D2DEC}
\usepackage[capitalize]{cleveref}
\crefname{section}{Sec.}{Secs.}
\Crefname{section}{Section}{Sections}
\Crefname{table}{Table}{Tables}
\crefname{table}{Tab.}{Tabs.}
\usepackage{fontawesome5}
\usepackage[T1]{fontenc}
\definecolor{electricindigo}{rgb}{0.44, 0.0, 1.0}
\definecolor{lightblue}{RGB}{240,245,255}
\definecolor{darkblue}{RGB}{40,40,85}
\definecolor{babyblue}{rgb}{0.54, 0.81, 0.94}
\definecolor{pearDark}{HTML}{2980B9}
\definecolor{pearDarker}{HTML}{1D2DEC}
\usepackage[capitalize]{cleveref}
\crefname{section}{Sec.}{Secs.}
\Crefname{section}{Section}{Sections}
\Crefname{table}{Table}{Tables}
\crefname{table}{Tab.}{Tabs.}
\usepackage{fontawesome5}

\usepackage[utf8]{inputenc} 
\usepackage[T1]{fontenc}    
\usepackage{url}            
\usepackage{booktabs}       
\usepackage{amsfonts}       
\usepackage{nicefrac}       
\usepackage{microtype}      
\usepackage[table]{xcolor}         
\usepackage{xcolor}         
\usepackage{graphicx}
\usepackage{subfigure}
\usepackage{amsmath}
\usepackage{multirow}
\usepackage{multicol}
\usepackage{nicematrix}

\definecolor{deblue}{RGB}{11,132,147}
\definecolor{ocra}{RGB}{204, 119, 34}

\usepackage{tikz}
\newcommand{\fcircle}[2][red,fill=red]{\tikz[baseline=-0.5ex]\draw[#1,radius=#2] (0,0.03) circle ;}

\usepackage{multirow}
\usepackage{colortbl}
\usepackage{tabulary}
\usepackage{etoolbox}
\usepackage{tikz}
\usepackage{pifont}
\begin{document}
\title{ProSMA-UNet: Decoder Conditioning for Proximal-Sparse Skip Feature Selection}
\titlerunning{ProSMA-UNet}
%

\author
{
Chun-Wun Cheng\inst{1}
\and
Yanqi Cheng  \inst{1} 
\and
Peiyuan Jing  \inst{2,3} 
\and
Guang Yang  \inst{3} 
\and
Javier A. Montoya-Zegarra \inst{2,4}   
\and 
Carola-Bibiane Schönlieb  \inst{1}
\and
Angelica I. Aviles-Rivero\inst{5}\textsuperscript{(\Letter)}}
%
\authorrunning{CW Cheng et al.}
%
\institute{
Department of Applied Mathematics and Theoretical Physics, University of Cambridge, UK 
\and
School of Engineering, Zurich University of Applied Sciences, CH
\and
Bioengineering Department and ImperialX, Imperial College London, UK
\and
Lucerne University Teaching and Research Hospital, CH
\and
Yau Mathematical Sciences Center, Tsinghua University, China \{\email{aviles-rivero@tsinghua.edu.cn}\}
}
  
\maketitle              
\begin{abstract} Medical image segmentation commonly relies on U-shaped encoder–decoder architectures such as U-Net, where skip connections preserve fine spatial detail by injecting high-resolution encoder features into the decoder. However, these skip pathways also propagate low-level textures, background clutter, and acquisition noise, allowing irrelevant information to bypass deeper semantic filtering—an issue that is particularly detrimental in low-contrast clinical imaging. 
Although attention gates have been introduced to address this limitation, they typically produce dense sigmoid masks that softly reweight features rather than explicitly removing irrelevant activations.
We propose ProSMA-UNet (Proximal-Sparse Multi-Scale Attention U-Net), which reformulates skip gating as a decoder-conditioned sparse feature selection problem.  ProSMA constructs a multi-scale compatibility field using lightweight depthwise dilated convolutions to capture relevance across local and contextual scales, then enforces explicit sparsity via an $\ell_1$ proximal operator with learnable per-channel thresholds, yielding a closed-form soft-thresholding gate that can remove noisy responses. To further suppress semantically irrelevant channels, ProSMA incorporates decoder-conditioned channel gating driven by global decoder context. Extensive experiments on challenging 2D and 3D benchmarks demonstrate state-of-the-art performance, with particularly large gains ($\approx20$\%) on difficult 3D segmentation tasks. Project page: \hyperlink{https://math-ml-x.github.io/ProSMA-UNet/}{https://math-ml-x.github.io/ProSMA-UNet/}

\keywords{Image Segmentation  \and U-Net \and Prox-Sparse Optimization.}

\end{abstract}

\section{Introduction}
Medical image segmentation is central to computer-aided diagnosis and treatment planning~\cite{lai2015deep}.  Deep learning has become the dominant framework for segmentation, and U-Net has become the prevailing backbone in many clinical segmentation pipelines~\cite{ronneberger2015u,isensee2021nnu,litjens2017survey}. Their effectiveness largely stems from skip connections that pass high-resolution encoder features to the decoder; however, this direct pathway can also bypass deeper semantic processing and propagate low-level textures, background clutter, and acquisition noise—especially in low-contrast imaging—leading to spurious regions and imprecise boundaries and motivating skip-aware designs that redesign or gate skip fusion to suppress irrelevant responses~\cite{zhou2020unetpp,oktay2018attention}.

Since U-Net popularized encoder--decoder segmentation with skip connections~\cite{ronneberger2015u}, numerous variants have been developed to improve representation, multi-scale fusion, and robustness by optimizing the encoder or decoder features~\cite{cheng2023continuous,cheng2025implicit,jiang2025rwkv}.
For volumetric segmentation, 3D U-Net and V-Net extend U-shaped designs with 3D operations~\cite{cciccek20163d,milletari2016v}. 
Other works focus on skip fusion, e.g., UNet++ with nested/dense skips and deep supervision~\cite{zhou2020unetpp} and UNet~3+ with full-scale aggregation~\cite{huang2020unet}. 
Attention U-Net introduces attention gate to reweight features~\cite{oktay2018attention}, nnU-Net highlights the strength of well-tuned U-Net pipelines~\cite{isensee2021nnu}, and Transformer-based hybrids (e.g., TransUNet, Swin-Unet) add global context while preserving skip-based localization~\cite{chen2024transunet,cao2022swin}. 
However, most skip-connection strategies still rely on aggregation or \emph{dense} reweighting (typically sigmoid masks), which attenuates but rarely removes incompatible activations, motivating \emph{decoder-conditioned sparse selection}; accordingly, ProSMA-UNet performs explicit skip feature selection via proximal sparsification rather than dense weighting. 
\\ \indent In this work, we propose \textbf{ProSMA-UNet} (\textbf{Pro}ximal-\textbf{S}parse
\textbf{M}ulti-scale \textbf{A}ttention U-Net), which reformulates skip connection as a decoder-conditioned sparse feature selection operator. Instead of directly mapping compatibility signals into dense masks, ProSMA-UNet first constructs a multi-scale compatibility field that measures the relevance of encoder features to the current decoder state across both local and contextual scales. This compatibility field is computed efficiently using lightweight depthwise dilated convolutions, enabling the gate to capture fine details while remaining sensitive to broader anatomical context. Crucially, ProSMA then enforces explicit sparsity via an $\ell_1$ proximal operation with learnable per-channel thresholds. This yields a closed-form soft-thresholding rule that can set incompatible activations to exact zeros, thereby
removing noisy responses rather than merely shrinking them. To further suppress semantically irrelevant information, we introduce a decoder-conditioned channel gating mechanism driven by global decoder context, which downweights skip channels that are inconsistent with the target structure at the current decoding stage. Together, these components turn skip connections into a principled, context-aware selection operator that retains useful high-resolution detail while preventing irrelevant signals from bypassing the decoder. 
\\ \indent \textbf{Our contributions are summarized as follows:} \fcircle[fill=deblue]{2pt} We show that skip connections in U-shaped medical segmentation networks act as a major pathway for noise and background leakage, and we formalise skip regulation as a decoder-conditioned sparse feature selection problem, rather than dense reweighting.
\fcircle[fill=deblue]{2pt} We propose ProSMA, a novel proximal-sparse skip gating mechanism that constructs a multi-scale decoder–encoder compatibility field and enforces explicit sparsity via a learnable $\ell_1$ proximal operator, yielding closed-form soft-thresholding and exact removal of irrelevant skip activations.
\fcircle[fill=deblue]{2pt} We introduce a decoder-conditioned channel gating mechanism that complements spatial sparsification by suppressing semantically inconsistent feature channels. --- We provide a theoretical analysis proving that proximal sparse skip gating achieves exact feature selection and is non-expansive, guaranteeing robustness to noise in the compatibility field. \fcircle[fill=deblue]{2pt} Extensive experiments on challenging 2D and 3D benchmarks demonstrate state-of-the-art performance, with particularly large gains ($\approx20$\%) on difficult 3D segmentation tasks.

\section{Methodology}

\subsection{Problem Statement and Overall architecture}

Let $x_s \in \mathbb{R}^{H_s \times W_s \times C_s}$ denote the encoder feature map at scale $s$, and let
$g_{s+1} \in \mathbb{R}^{H_s \times W_s \times C'_s}$ denote the decoder feature map upsampled to the same spatial resolution.
Conventional U-shaped skip connections pass $x_s$ directly to the decoder to recover high-frequency detail, but $x_s$ may also carry clutter, noise, and semantically irrelevant structures.
We therefore model the skip pathway as a decoder-conditioned selection operator
$\mathcal{G}_s : (x_s, g_{s+1}) \mapsto \tilde{x}_s$,
where $\tilde{x}_s$ retains only those components of $x_s$ that are relevant under the context $g_{s+1}$.

\begin{figure}[t!]
\centering
\includegraphics[width=0.95\textwidth]{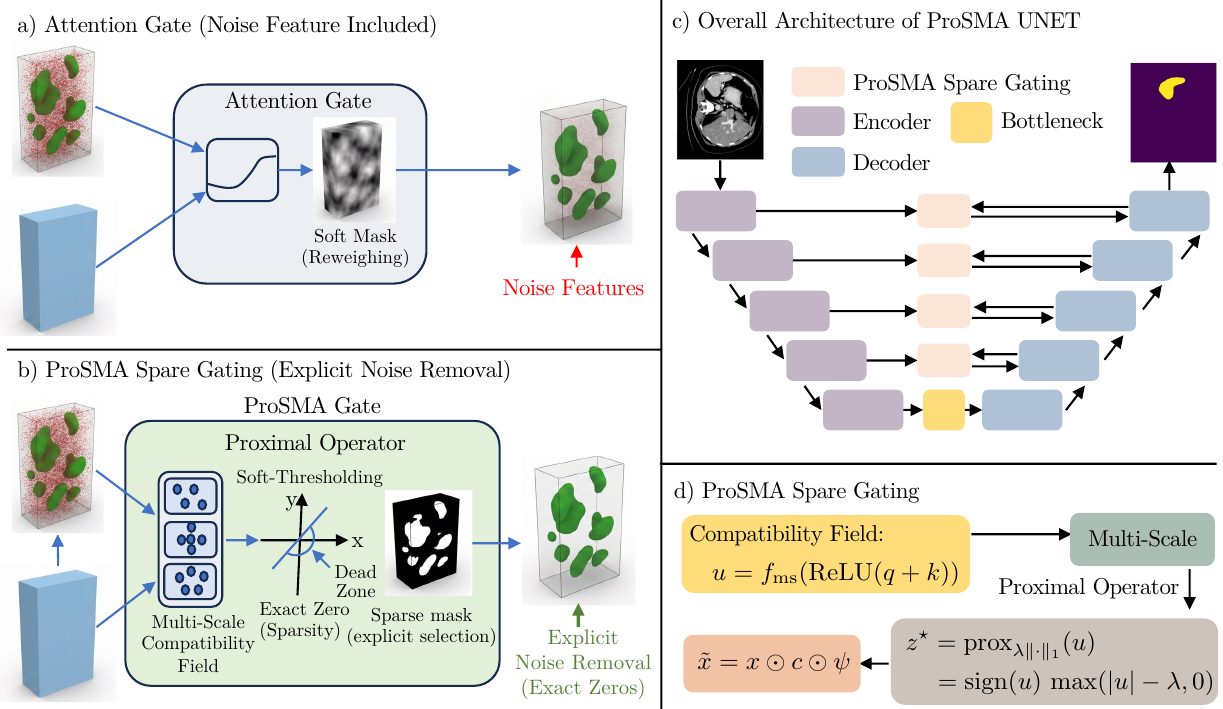}
\caption{ProSMA-UNet motivation and overview. 
(a) Conventional attention gates generate a \emph{dense} soft mask (sigmoid reweighting), which can still pass weak but harmful skip activations (noise features) into the decoder. 
(b) Our ProSMA sparse gating constructs a multi-scale decoder--encoder compatibility field and applies an $\ell_1$ proximal (soft-thresholding) operator to induce \emph{explicit sparsity} (exact zeros), enabling direct removal of irrelevant skip responses. 
(c) ProSMA-UNet integrates the proposed sparse gating module at each skip connection to condition high-resolution feature transfer on decoder context. 
(d) Overview of ProSMA Spare Gating.}
\label{fig:prosma_overview}
\end{figure}
ProSMA-UNet follows a standard U-shaped encoder--decoder architecture. The encoder consists of five resolution levels implemented via max-pooling and residual feature extraction blocks, producing feature maps
$\{x_1, x_2, x_3, x_4, x_5\}$, where $x_1$ has the highest spatial resolution and $x_5$ corresponds to the bottleneck representation.
The bottleneck is implemented as a residual block operating on $x_5$.
The decoder mirrors the encoder with four upsampling stages using bilinear interpolation followed by convolution, yielding decoder features
$\{g_5, g_4, g_3, g_2\}$.

At each decoding stage $s$, the encoder feature $x_s$ is first filtered by the proposed ProSMA gating operator $\mathcal{G}_s$ using the decoder context $g_{s+1}$.
The resulting feature $\tilde{x}_s$ is concatenated with the upsampled decoder feature and refined by a residual block:
$g_s
=
\mathrm{ResBlock}
\Big(
\mathrm{Concat}
\big(
\mathcal{G}_s(x_s, g_{s+1}),
\;
\mathrm{Up}(g_{s+1})
\big)
\Big)$. Figure \ref{fig:prosma_overview} (c) and (d) provide an overview of our method and the proposed Proximal-Sparse Gate.

\subsection{Proximal-Sparse Multi-Scale Attention Gate}
In medical image segmentation, only a small subset of encoder features are relevant to the target anatomy, while many skip features correspond to background clutter or acquisition noise.
This suggests that skip gating is fundamentally a \emph{feature selection} problem rather than a soft reweighting problem. Figure \ref{fig:prosma_overview} (a) shows that the attention mask cannot filter out the noise, while our method can remove the noise by Proximal-Sparse Gate. Accordingly, we reinterpret skip connections as a sparse, context-dependent selection operator.
Given an encoder feature map $x \in \mathbb{R}^{H \times W \times C_x}$ and a decoder feature map
$g \in \mathbb{R}^{H \times W \times C_g}$ at the same spatial resolution, we first project them into a shared latent space:
\begin{equation}
q = W_x x,
\qquad
k = W_g g,
\label{eq:proj}
\end{equation}
where $W_x$ and $W_g$ are $1 \times 1$ convolutions producing latent features in
$\mathbb{R}^{H \times W \times C_a}$, where $C_a$ denotes the dimensionality of a latent compatibility space used for sparse feature selection.

 \textbf{Multi-Scale Compatibility Field}.
To capture both local and contextual relevance, we construct a multi-scale compatibility field
\begin{equation}
u
=
f_{\mathrm{ms}}\!\left(
\mathrm{ReLU}(q + k)
\right),
\qquad
u \in \mathbb{R}^{H \times W \times C_a},
\label{eq:compat}
\end{equation}
where the ReLU nonlinearity enforces non-negative compatibility responses.
The operator $f_{\mathrm{ms}}$ aggregates information across multiple receptive fields and is defined as
$f_{\mathrm{ms}}(v)
=
\phi\!\left(
\left[
D^{(d_1)} v,\;
D^{(d_2)} v,\;
\dots,\;
D^{(d_m)} v
\right]
\right)$.
Here $D^{(d)}$ denotes a depthwise convolution with dilation rate $d$, and $\phi$ is a $1 \times 1$ convolution that fuses information across scales.
The use of depthwise convolutions preserves channel-wise independence, which is consistent with the subsequent per-channel sparse selection.
By ~\eqref{eq:proj} and~\eqref{eq:compat}, the proposed gating module defines a multi-scale compatibility field $u(x,g) \in \mathbb{R}^{H \times W \times C_a}$, whose entries quantify the alignment between encoder features $x$ and decoder context $g$ across spatial scales. In practice, only a subset of these compatibility responses is expected to be informative, while the remainder correspond to background clutter or acquisition noise.

From a variational perspective, this motivates imposing sparsity as a structural prior on the compatibility field.
Specifically, skip gating is formulated as the problem of suppressing weak or uninformative compatibility responses while preserving salient ones.
This interpretation treats $u$ as an unregularised signal to be filtered and does not require it to arise as the gradient of an explicit energy functional.
Under this formulation, sparsity is enforced directly on the compatibility responses, leading naturally to a proximal optimisation framework.

\noindent\textbf{Proximal Sparse Gating}. Rather than directly mapping $u$ to an attention mask, we enforce explicit sparsity by solving the following optimisation problem:
\begin{equation}
z^{\star}
=
\arg\min_{z}
\;
\frac{1}{2}\|z - u\|_2^2
+
\lambda \|z\|_1,
\label{eq:l1_prox}
\end{equation}
where $\lambda > 0 $ controls the sparsity level. This formulation corresponds to a variational regularisation of the compatibility field under an $\ell_1$ prior and is well-defined regardless of how $u$ is generated.

Our proximal sparse gating corresponds to one learned proximal gradient step under $\ell_1$ regularisation. Equation~\ref{eq:l1_prox} admits a closed-form solution given by the proximal operator of the $\ell_1$ norm:
\begin{equation}
z^{\star}
=
\mathrm{prox}_{\lambda \|\cdot\|_1}(u)
=
\mathrm{sign}(u)\,\max(|u| - \lambda, 0).
\end{equation}
which yields exact zero-valued coefficients and therefore induces explicit sparsity, implementing feature selection rather than continuous reweighting.
To allow adaptive sparsity, we parameterise $\lambda$ as a learnable, per-channel threshold:
$\lambda = \mathrm{softplus}(\theta),
\qquad
\theta \in \mathbb{R}^{C_a}$,
where $C_a$ denotes the dimensionality of the latent compatibility space.

\noindent{\textbf{Spatial Attention Mask.}}
The sparse latent field is converted into a spatial attention mask via a pointwise projection:
$\psi
=
\sigma\!\left(
\Psi(z^{\star})
\right),
\qquad
\psi \in (0,1)^{H \times W}$,
where $\Psi$ is a $1 \times 1$ convolution that aggregates latent compatibility channels and $\sigma$ denotes the sigmoid function.

\noindent{\textbf{Decoder-Conditioned Channel Gating.}} Spatial sparsity alone does not suppress semantically irrelevant feature channels.
We therefore introduce decoder-conditioned channel gating: $c
=
\sigma\!\left(
\mathrm{MLP}\!\left(
\mathrm{GAP}(g)
\right)
\right),
\qquad
c \in (0,1)^{C_x}$,
where $\mathrm{GAP}$ denotes global average pooling and $\mathrm{MLP}$ is a lightweight multilayer perceptron.

Considering previous  components, the proposed ProSMA skip operator can be written compactly as
\begin{equation}
\tilde{x} = x \odot c \odot \psi,
\end{equation}
where $\psi$ decides \emph{where} to pass information (spatial selectivity) and  $c$ decides \emph{what} semantic channels to emphasize (channel selectivity).  This formulation transforms skip connections from unconstrained soft attention into a principled, context-aware sparse selection operator.

\subsection{Theoretical Foundation for ProSMA}
We analyse the theoretical properties of the proposed proximal sparse gating operator.
ProSMA enforces sparsity by applying an $\ell_1$ proximal step to the multi-scale compatibility field $u$, producing a filtered response $z$ that controls which skip features are propagated to the decoder.
We focus on two desirable properties for medical segmentation: (i) \emph{exact feature selection} (irrelevant activations are set to zero) and (ii) \emph{stability under perturbations} (noise in $u$ is not amplified).
The following theorem formalises these guarantees for our proximal sparse gating.
\begin{theorem}[Exact Feature Selection and Stability of Proximal Sparse Gating]
\label{thm:prox_sparse_properties}
Let $u \in \mathbb{R}^{H \times W \times C_a}$ be the multi-scale compatibility
field defined in Eq, and let $\lambda \in \mathbb{R}_{+}^{C_a}$ be the
per-channel sparsity threshold defined in~\eqref{eq:l1_prox}. Consider the proximal sparse
gating problem
$z
=
\arg\min_{z \in \mathbb{R}^{H \times W \times C_a}}
\;
\frac{1}{2}\|z - u\|_F^2
+
\sum_{c=1}^{C_a} \lambda_c \|z_{:,:,c}\|_1$.
Then the following properties hold: \textbf{Monotonic sparsity with respect to $\lambda$.} If $\lambda'_c \ge \lambda_c$ for all $c$, then
$\mathrm{supp}(z(u;\lambda'))
    \subseteq
    \mathrm{supp}(z(u;\lambda))$.
Thus, increasing the learned sparsity parameter can only remove
active features and never introduce new ones.
\textbf{Non-expansive stability.}
For any $u,v \in \mathbb{R}^{H \times W \times C_a}$,
$\| z(u;\lambda) - z(v;\lambda) \|_F
    \le
    \| u - v \|_F$ .
Hence, proximal sparse gating is $1$-Lipschitz and cannot amplify
perturbations in the compatibility field.
\end{theorem}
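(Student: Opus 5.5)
The plan is to reduce everything to scalars using separability. The objective
$\frac{1}{2}\|z-u\|_F^2+\sum_c \lambda_c\|z_{:,:,c}\|_1$ decouples into a sum over entries $(i,j,c)$ of the scalar problems
$\min_{t\in\mathbb{R}} \frac{1}{2}(t-u_{ijc})^2+\lambda_c|t|$.
Each scalar objective is strictly convex, so the minimiser is unique and $z$ is well defined. By the closed form already stated for the $\ell_1$ proximal operator, applied channel-wise, we get
$z_{ijc}=S_{\lambda_c}(u_{ijc})$, where $S_\lambda(a)=\mathrm{sign}(a)\max(|a|-\lambda,0)$. If a self-contained check is wanted, it follows from the subgradient optimality condition $0\in t-a+\lambda\,\partial|t|$. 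That condition forces $t=0$ exactly when $|a|\le\lambda$, and gives $t=a-\lambda\,\mathrm{sign}(a)$ otherwise.

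For monotonic sparsity, the closed form shows that $z_{ijc}(u;\lambda)\neq 0$ if and only if $|u_{ijc}|>\lambda_c$. Hence $\mathrm{supp}(z(u;\lambda))=\{(i,j,c):|u_{ijc}|>\lambda_c\}$. If $\lambda'_c\ge\lambda_c$, then $|u_{ijc}|>\lambda'_c$ implies $|u_{ijc}|>\lambda_c$, which gives the inclusion immediately.

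For non-expansiveness, it suffices by separability to prove the scalar inequality $|S_\lambda(a)-S_\lambda(b)|\le|a-b|$ for every $\lambda\ge0$. One then squares and sums over all entries to obtain the Frobenius bound. I see two routes for the scalar inequality.

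\emph{First route (direct).} Note that $S_\lambda$ is continuous and piecewise linear, with slopes in $\{0,1\}$ on the three intervals $(-\infty,-\lambda]$, $[-\lambda,\lambda]$ and $[\lambda,\infty)$. A continuous piecewise-linear function whose slopes all have absolute value at most $1$ is $1$-Lipschitz. This follows from writing $S_\lambda(a)-S_\lambda(b)=\int_b^a S_\lambda'(s)\,ds$ (valid since $S_\lambda$ is absolutely continuous) together with $|S_\lambda'|\le1$ almost everywhere.

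\emph{Second route (more conceptual).} Invoke firm non-expansiveness of proximal operators of proper convex lower semicontinuous functions. Adding the optimality conditions $u-z(u)\in\partial R(z(u))$ and $v-z(v)\in\partial R(z(v))$, and using monotonicity of $\partial R$, gives
$\langle z(u)-z(v),u-v\rangle\ge\|z(u)-z(v)\|^2$.
Cauchy--Schwarz then yields the result. Here $R(z)=\sum_c\lambda_c\|z_{:,:,c}\|_1$, which is convex because every $\lambda_c\ge0$.

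I would present the direct route, with the second as a remark. The only mildly delicate step is handling the kinks at $\pm\lambda_c$ in the scalar Lipschitz argument. The integral (absolute continuity) formulation, or equivalently a short case split on the signs and sizes of $a$ and $b$, disposes of it. Everything else is bookkeeping over the separable index set.
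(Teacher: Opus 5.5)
Your proposal is correct and follows the paper's proof. Both reduce to entrywise soft-thresholding and get monotone sparsity from the support characterization $\{|u_{ijc}|>\lambda_c\}$; the paper phrases this contrapositively, as zeros under $\lambda$ remaining zero under $\lambda'$. Both get non-expansiveness from the scalar bound $|S_\lambda(a)-S_\lambda(b)|\le|a-b|$ for the piecewise-linear $S_\lambda$, squared and summed over entries. Your separability and uniqueness argument, the subgradient derivation of the closed form, the absolute-continuity justification and the firm non-expansiveness remark only make explicit details the paper takes for granted.
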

\begin{proof}
\textbf{Monotonic sparsity.}
The proximal $\ell_1$ operator is the soft-thresholding map applied
entrywise:$z_{p,c}
=
\operatorname{sign}(u_{p,c})
\max(|u_{p,c}|-\lambda_c,0)$.
If $\lambda'_c \ge \lambda_c$, then
$|u_{p,c}| \le \lambda_c
\;\Rightarrow\;
|u_{p,c}| \le \lambda'_c$.
Therefore any entry that is zero under $\lambda$ remains zero under
$\lambda'$, implying
$\mathrm{supp}(z(u;\lambda'))
\subseteq
\mathrm{supp}(z(u;\lambda))$. 

\textbf{Non-expansiveness.}
Define the scalar soft-threshold operator
$S_\lambda(t)
=
\operatorname{sign}(t)\max(|t|-\lambda,0)$.
This function is piecewise linear with slopes in $[0,1]$.
Hence for any $a,b \in \mathbb{R}$,
$|S_\lambda(a)-S_\lambda(b)| \le |a-b|$.
Applying this entrywise and summing over all spatial positions and channels yields $\| z(u;\lambda) - z(v;\lambda) \|_F^2
=
\sum_{p,c}
|S_{\lambda_c}(u_{p,c}) - S_{\lambda_c}(v_{p,c})|^2
\le
\|u-v\|_F^2$. Taking square roots gives the desired result.
\end{proof}

This theorem formalises that ProSMA gating is feature selection (exact zeros in $z$) rather than dense sigmoid reweighting, and it guarantees that the proximal step is stable (non-expansive), so it cannot amplify noise in the multi-scale compatibility field before producing the final spatial mask.

\section{Experimental Results}
\textbf{Dataset, Baselines and Experiment Setting.}
We evaluated ProSMA-UNet on three 2D and two 3D medical image segmentation benchmarks. The 2D datasets include BUSI~\cite{al2020dataset}, GlaS~\cite{valanarasu2021medical} and Kvasir-SEG~\cite{jha2020kvasir}. BUSI contains 647 benign and malignant breast ultrasound images resized to 256 × 256. GlaS consists of 612 frames (384 × 288) from 31 sequences; following prior work, we used 165 images resized to 512 × 512. Kvasir-SEG includes 1,000 colonoscopy images (800 training, 200 testing) at 256 × 256 for polyp segmentation.
For 3D segmentation, we used the Spleen and Colon datasets from the Medical Segmentation Decathlon ~\cite{antonelli2022medical}. The Spleen dataset comprises 61 abdominal CT volumes with notable inter-patient organ size variation. The Colon dataset contains 190 abdominal CT volumes for tumor segmentation, characterized by high heterogeneity in tumor appearance and shape, making robust 3D segmentation particularly challenging.

We compared to the representative baselines, including U-Net~\cite{ronneberger2015u}, U-Net++~\cite{zhou2020unetpp}, and Implicit U-KAN 2.0~\cite{cheng2025implicit} with a standard skip connection that concatenates the two features.  Then U-NeXt~\cite{valanarasu2022unext}, Rolling-UNet~\cite{liu2024rolling}, and UKAN~\cite{li2025u} use sum-element-wise addition skip connections. Finally, Attention UNet~\cite{oktay2018attention} uses an attention gate to reweight the features. For BUSI and Glas, we follow the same experiment setting in UKAN while the Kvasir SEG follow same setting from
Implicit U-KAN 2.0. All the experiments are run on an NVIDIA A100 40GB.

\begin{table}[t!]
\renewcommand{\arraystretch}{1.25}
\caption{
Quantitative comparison on three heterogeneous medical imaging benchmarks. Results are reported over three independent runs, demonstrating both superior performance and stable convergence. The best results are indicated by
\textbf{\colorbox{green!20}{Green}}.
}
\label{tab:results-multilevel}
\begin{NiceTabular}{
l | l|
>{\centering\arraybackslash}p{1.6cm}
>{\centering\arraybackslash}p{1.6cm} |
>{\centering\arraybackslash}p{1.6cm}
>{\centering\arraybackslash}p{1.6cm} |
>{\centering\arraybackslash}p{1.6cm}
>{\centering\arraybackslash}p{1.6cm}
}
\hline
\rowcolor[HTML]{EFEFEF}
\multirow{2}{*}{Method} &
\multirow{2}{*}{Year} &
\multicolumn{2}{c|}{BUSI} &
\multicolumn{2}{c|}{GlaS} &
\multicolumn{2}{c}{Kvasir-SEG} \\
\cline{3-8}
\rowcolor[HTML]{EFEFEF}
& & IoU $\uparrow$ & F1 $\uparrow$
& IoU $\uparrow$ & F1 $\uparrow$
& IoU $\uparrow$ & F1 $\uparrow$ \\
\hline

U-Net & 2015 &
57.22$\pm$4.74 & 71.91$\pm$3.54 &
86.66$\pm$0.91 & 92.79$\pm$0.56 &
54.58$\pm$0.18 & 79.59$\pm$0.08  \\

Att-UNet & 2018 &
55.18$\pm$3.61 & 70.22$\pm$2.88 &
86.84$\pm$1.19 & 92.89$\pm$0.65 &
62.75$\pm$0.56 & 83.36$\pm$0.78 \\

U-Net++ & 2018 &
57.41$\pm$4.77 & 72.11$\pm$3.90 &
87.07$\pm$0.76 & 92.96$\pm$0.44 & 
61.93$\pm$0.74 & 83.33$\pm$0.42  \\

U-NeXt & 2022 &
59.06$\pm$1.03 & 73.08$\pm$1.32 &
84.51$\pm$0.37 & 91.55$\pm$0.23 &
49.50$\pm$0.48 & 76.64$\pm$0.26  \\

R-UNet & 2024 &
61.00$\pm$0.64 & 74.67$\pm$1.24 &
86.42$\pm$0.96 & 92.63$\pm$0.62 &
60.28$\pm$1.03 & 82.25$\pm$0.35  \\

UKAN & 2024 &
63.38$\pm$2.83 & 76.40$\pm$2.90 &
87.64$\pm$0.32 & 93.37$\pm$0.16 &
53.78$\pm$2.86 & 78.58$\pm$1.41 \\

UKAN2.0 & 2025 &
63.05$\pm$1.01 & 76.86$\pm$0.66 & -- & --  & 58.89$\pm$1.38 & 81.04$\pm$0.65 \\
\hline
P-UNET & 2026 &
{\cellcolor[HTML]{D7FFD7}66.24$\pm$1.32} &
{\cellcolor[HTML]{D7FFD7}79.29$\pm$0.93} &
{\cellcolor[HTML]{D7FFD7}88.76$\pm$0.14} &
{\cellcolor[HTML]{D7FFD7}94.04$\pm$0.08} &
{\cellcolor[HTML]{D7FFD7}66.23$\pm$2.15} &
{\cellcolor[HTML]{D7FFD7}85.43$\pm$1.19} \\
\hline
\end{NiceTabular}
\vspace{-0.3cm}
\label{tab:exp_seg}
\end{table}

\fcircle[fill=deblue]{2pt} \textbf{2D segmentation} Table~\ref{tab:exp_seg} reports a quantitative comparison between our ProSMA-UNet (P-UNET) and other baselines \footnote{
For BUSI and GlaS, baseline results are taken from U-KAN~\cite{li2025u} under the same protocol. UKAN2.0 could not be evaluated on GlaS due to GPU memory limits.
} on three heterogeneous medical scenarios.  As shown, P-UNET consistently achieves the best performance across all datasets, establishing a clear new state of the art performance. On BUSI, P-UNET outperforms the strongest competing results by +2.86 IoU and +2.43 F1 (+4.51\% and +3.16\% relative gains, respectively). On GlaS, P-UNET achieves 88.76$\pm$0.14 IoU and 94.04$\pm$0.08 F1, exceeding the best baseline by +1.12 IoU and +0.67 F1. More importantly, on the challenging Kvasir-SEG benchmark, P-UNET delivers the largest improvement, obtaining 66.23$\pm$2.15 IoU and 85.43$\pm$1.19 F1, i.e., +3.48 IoU and +2.07 F1 over the best competing method, and a substantial +12.45 IoU / +6.85 F1 gain over U-KAN. These consistent and margin-wise improvements across ultrasound, histology, and endoscopy images verify the strong generalization ability of P-UNET. 

\begin{table}[t!]
\centering
\begin{minipage}{0.6\textwidth}
\centering
\caption{Comparison of 3D segmentation on Spleen and Colon datasets on the F1.}
\label{tab:3D_combined}
\begin{tabular}{>{\centering\arraybackslash}p{3cm} |>{\centering\arraybackslash}p{2cm} |>{\centering\arraybackslash}p{2cm}}
\hline
\cellcolor[HTML]{EFEFEF}\textsc{Method} &
\cellcolor[HTML]{EFEFEF}Spleen &
\cellcolor[HTML]{EFEFEF}Colon \\ \hline
U-Net 3D     & 90.99 $\pm$ 1.31 & 51.79 $\pm$ 2.35 \\
U-KAN 3D     & 95.02 $\pm$ 0.28 & 51.32 $\pm$ 0.19 \\
UKAN2.0 3D   & 97.02 $\pm$ 0.16 & 53.05 $\pm$ 0.29 \\
Ours         & \cellcolor[HTML]{D9FFD9}\textbf{97.59 $\pm$ 0.10} 
             & \cellcolor[HTML]{D9FFD9}\textbf{63.14 $\pm$ 0.99} \\ \hline
\end{tabular}
\end{minipage}
\hfill
\begin{minipage}{0.38\textwidth}
\centering
\caption{Ablation study on gating mechanisms (F1).}
\label{tab:ablation}
\begin{tabular}{>{\centering\arraybackslash}p{2.8cm} |>{\centering\arraybackslash}p{1.4cm}}
\hline
\cellcolor[HTML]{EFEFEF}\textsc{Method} &
\cellcolor[HTML]{EFEFEF}F1 \\ \hline
Without PSG & 74.05 \\
Only SS & 77.47 \\
Only CG   & 75.22 \\
Full Model            & \cellcolor[HTML]{D9FFD9}\textbf{80.13} \\ \hline
\end{tabular}
\end{minipage}
\vspace{-0.4cm}
\end{table}

\fcircle[fill=deblue]{2pt} \textbf{3D segmentation.}
To assess scalability beyond 2D, we extend our model to a 3D variant and evaluate it on two volumetric tasks (Spleen and Colon), comparing against 3D U-Net and KAN-based baselines (U-KAN 3D and UKAN2.0 3D). As shown in Table~\ref{tab:3D_combined}, our method achieves the best results on both datasets: 97.59$\pm$0.10 on Spleen, improving over the strongest baseline (UKAN2.0 3D) by +0.57, and 63.14$\pm$0.99 on the more challenging Colon benchmark, surpassing UKAN2.0 3D by +10.09 (about +19.0\% relative). These gains indicate that proximal-sparse gate generalizes effectively to 3D and yields stronger, more reliable volumetric representations across different anatomical targets.

\fcircle[fill=deblue]{2pt} \textbf{Visualisation. }
Figure \ref{fig:visualisation} presents qualitative visualisations supporting our quantitative gains and highlighting our model's superior segmentation across representative cases. On the 2D GlaS dataset, P-UNET generates uniform, coherent gland masks with smooth, well-defined boundaries, in contrast to the irregular and fluctuating edges produced by U-Net, Att-UNet, and U-KAN. For BUSI ultrasound images, our model accurately localises the core tumour region, while U-Net and U-KAN capture only partial areas and Att-UNet is distracted by speckle noise, yielding a dispersed prediction. In the 3D Spleen task, all methods recover the overall shape, but P-UNET produces markedly smoother surfaces; error maps confirm fewer and less pronounced errors than U-KAN and UKAN2.0.
\\ \fcircle[fill=deblue]{2pt} \textbf{Ablation Studies. }
We ablate the proposed skip-gating components in Table~\ref{tab:ablation}. Removing Proximal Sparse Gating (PSG) causes a sharp performance drop, indicating that ungated skip connections pass irrelevant or noisy activations that hinder segmentation. Using Sparse Spatial (SS) selection alone yields a substantial improvement, suggesting it suppresses background and ambiguous regions in skip features. Channel gating (CG) alone provides a smaller gain, showing it is helpful but insufficient to fully filter harmful skip information. Combining SS and CG achieves the best result, improving by +6.08 F1 over the ungated baseline. Overall, SS and CG are complementary: SS controls where to pass information, while CG determines which semantic channels to emphasize, enabling noise removal and stronger medical image segmentation.

\begin{figure}[t!]
\centering
\includegraphics[width=1\textwidth]{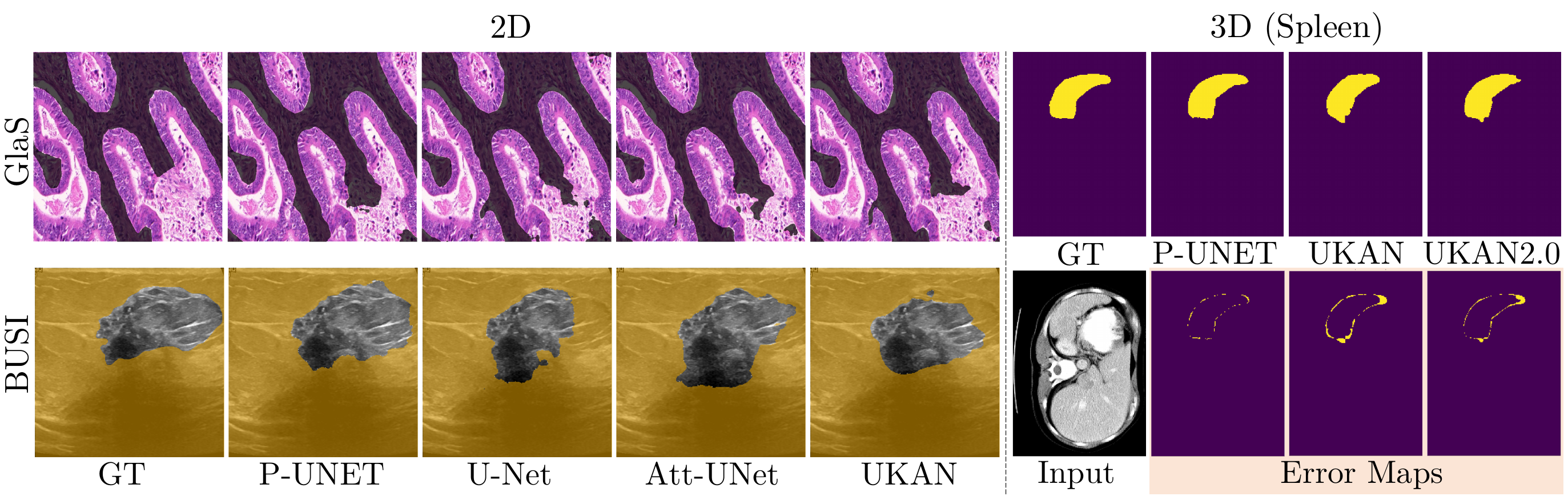}
\caption{Qualitative comparison of segmentation masks produced by P-UNET and baseline methods across 2D and 3D medical imaging datasets.
}
\label{fig:visualisation}
\end{figure}

\section{Conclusion}
In this work, we propose \textbf{ProSMA-UNet}, which treats skip connections as a \emph{decoder-conditioned feature selection} problem rather than unconstrained feature transfer. ProSMA builds a multi-scale decoder--encoder compatibility field using efficient depthwise dilated convolutions, then enforces \emph{explicit sparsity} with a learnable $\ell_1$ proximal (soft-thresholding) operator to remove irrelevant skip activations. A decoder-conditioned channel gate further suppresses semantically inconsistent channels. We also show that proximal sparse gating provides exact feature selection and is non-expansive, ensuring stability under perturbations. Experiments on diverse 2D and 3D benchmarks consistently outperform strong U-Net variants, with the largest gains on challenging 3D tasks.

\subsubsection{Acknowledgments} 
CWC and JAMZ are supported by the Swiss National Science Foundation under grant number 20HW-1 220785. YC is funded by an AstraZeneca studentship and a Google studentship. CBS  acknowledges support from the Philip Leverhulme Prize, the Royal Society Wolfson Fellowship, the EPSRC grants EP/S026045/1, EP/T003553/1, EP/N014588/1, the Wellcome Innovator Award RG98755 and the Alan Turing Institute.
AIAR gratefully acknowledges the support of the Yau Mathematical Sciences Center, Tsinghua University. This work is also supported by the Tsinghua University Dushi Program.

\bibliographystyle{splncs04}
\bibliography{main}
\end{document}